\newtheorem{theorem}{Theorem}[section]
\newtheorem{proposition}[theorem]{Proposition}
\newtheorem{corollary}[theorem]{Corollary}
\DeclareMathOperator*{\EV}{\mathbb{E}}
\renewcommand{\S}{\mathcal{S}}
\newcommand{\X}{\mathcal{X}}
\newcommand{\A}{\mathcal{A}}
\renewcommand{\P}{\mathbb{P}}
\newcommand{\Obs}{\mathbb{O}}
\newcommand{\B}{\mathcal{B}}
\newcommand{\V}{\mathbb{V}}
\newcommand{\norm}[1]{\left\lVert#1\right\rVert}
\DeclareMathOperator*{\vs}{\mathbf{s}}
\DeclareMathOperator*{\vx}{\mathbf{x}}
\DeclareMathOperator*{\va}{\mathbf{a}}
\DeclareMathOperator*{\vH}{\mathbf{H}}
\newcommand\margincomment[4]
\title{The Limits of Pure Exploration in POMDPs: \;
When the Observation Entropy is Enough}
\author{Riccardo Zamboni  \\
    riccardo.zamboni@polimi.it \\
    Politecnico di Milano
    \And
    Duilio Cirino  \\
    duilio.cirino@mail.polimi.it \\
    Politecnico di Milano
    \And
    Marcello Restelli  \\
    marcello.restelli@polimi.it \\
    Politecnico di Milano
    \And
    Mirco Mutti  \\
    mirco.m@technion.ac.il \\
    Technion}
\begin{document}

\maketitle

\begin{abstract}
The problem of \emph{pure exploration} in Markov decision processes has been cast as maximizing the entropy over the state distribution induced by the agent's policy, an objective that has been extensively studied.
However, little attention has been dedicated to state entropy maximization under \emph{partial observability}, despite the latter being ubiquitous in applications, e.g., finance and robotics, in which the agent only receives noisy observations of the true state governing the system's dynamics.
How can we address state entropy maximization in those domains? In this paper, we study the simple approach of maximizing the \emph{entropy over observations} in place of true latent states. First, we provide lower and upper bounds to the approximation of the true state entropy that only depend on some properties of the observation function. Then, we show how knowledge of the latter can be exploited to compute a principled regularization of the observation entropy to improve performance. With this work, we provide both a flexible approach to bring advances in state entropy maximization to the POMDP setting and a theoretical characterization of its intrinsic limits.
\end{abstract}

\allowdisplaybreaks

\section{Introduction}
\label{sec:introduction}


A plethora of recent works~\citep{hazan2019provably, lee2019smm, mutti2020ideal, tarbouriech2020active, zhang2020exploration, guo2021geometric, liu2021behavior, liu2021aps, seo2021state, yarats2021reinforcement, mutti2021task, mutti2022importance, mutti2022unsupervised, nedergaard2022k, yang2023cem, tiapkin2023fast, jain2023maximum, kim2023accelerating, zisselman2023explore, mutti2023unsupervised} have studied \emph{state entropy maximization} for pure exploration of Markov Decision Processes~\citep[MDPs,][]{puterman2014markov} in the absence of a reward function. 
In this Maximum State Entropy (MSE) framework, formally introduced by~\citet{hazan2019provably}, the agent aims to maximize the entropy of the state visitation induced by its policy instead of the cumulative sum of rewards, which is a generalization of the Reinforcement Learning~\citep[RL,][]{bertsekas2019reinforcement} problem that often goes as \emph{convex} RL~\citep{zahavy2021reward, geist2021concave, mutti2022challenging, mutti2023convex} due to the convexity of the entropy function.

Despite the problem being harder than RL, MSE has brought remarkable empirical success as a tool for data collection~\citep{yarats2022don}, transition model estimation~\citep{tarbouriech2020active}, and policy pre-training~\citep{mutti2021task}, as well as an essential building block for improved skills discovery~\citep{liu2021aps} and generalization across various tasks~\citep{zisselman2023explore}.

Especially, \citet{mutti2021task, liu2021behavior, seo2021state, yarats2021reinforcement} have popularized a practical \emph{policy optimization} procedure that allows to address MSE at scale. Their method is based on pairing flexible nearest-neighbors estimators of the entropy~\citep{singh2003nearest} with policies implemented through neural networks trained via backpropagation, a recipe for success in complex and high-dimensional domains, e.g., continuous control or learning from images.

Although many facets of the MSE problem have been studied, all of the previous works assume the states, on which the entropy is maximized, to be fully observable to the agent. However, this is not the case for several interesting applications. Let us think of a trading scenario: A trader typically accesses a small portion of the true state, e.g., current stock prices, volumes, and so on, while other parts, e.g., the general sentiment of the market or companies' revenues published quarterly, mostly remain latent, albeit crucial to define the system's dynamics. The same goes for a robotic navigation task, where the state is often accessed through noisy sensory inputs, such as cameras and proximity, rather than true spatial coordinates. 
An important question arises naturally: 
\begin{center} \vspace{-2pt}
    \emph{Can we maximize the entropy over states getting partial observations only?}
\end{center} \vspace{-2pt}
While the problem of addressing a learning objective that is hidden from the agent is fascinating per se, we argue that any improvement in this direction is paramount to bringing MSE closer to practical applications. Unfortunately, the problem we just described is a clear generalization of learning in Partially Observable MDPs~\citep[POMDPs,][]{aastrom1965optimal}, which is well-known to be intractable.

In this paper, we study the simple approach of maximizing the entropy over the partial observations in place of the true states. This framework, which we call \emph{Maximum Observation Entropy} (MOE), gives two crucial benefits. On the one hand, we can sidestep the inherent computational hardness of dealing with POMDPs~\citep{papadimitriou1987complexity}, as the class of policies that are Markovian over observations suffices~\citep{hazan2019provably} and we do not need to build complex belief distributions over the true state as the objective is fully observable. Secondly, all of the previous implementations can be directly transferred from MSE to MOE without changes, which gives a head start to the MOE problem instead of implementing new techniques from the ground up.

Surprisingly, we can show that this straightforward approach is not hopeless. We derive formal approximation results on the difference between the entropy induced over observations and the corresponding entropy over the true (latent) states, which can be upper bounded as a function of properties of the observation matrix, namely its maximum singular value or the average entropy of its rows. This is in stark contrast with RL in POMDPs, in which the optimal policy over observations is almost arbitrarily sub-optimal under similar assumptions.

Whereas the approximation bounds characterize settings where optimizing for MOE is enough, they tell little about how to address domains in which the observation matrix is not so well-behaved.
In those settings, we show that knowledge of the observation matrix, a reasonable requirement in some applications (e.g., the specifics of sensors and cameras equipped on a robot may be available), can be exploited to improve performance over MOE. First, we derive a principled regularization term that discounts the entropy induced by the observations with the entropy of their emission, intuitively putting more weight on the observations for which the emission process is reliable and less on those that are known to be noisy. Then, we incorporate the latter regularization in an appropriate policy gradient algorithm inspired by previous MSE approaches~\citep{mutti2021task, liu2021behavior}. Finally, we test the algorithm on a set of simple yet illustrative domains to validate our theoretical findings, bringing an algorithmic blueprint for scalable state entropy maximization in POMDPs.

\paragraph{Contributions.} Throughout the paper, we make the following contributions:
\begin{itemize}[itemsep=-1pt, leftmargin=*, topsep=2pt]
    \item In Section~\ref{sec:problem_formulation}, we provide the first generalization of the MSE problem to the POMDP setting, introducing MOE as a simple yet flexible and tractable approach;
    \item In Section~\ref{sec:characterization}, we theoretically analyze the gap between MOE and MSE, providing a family of upper and lower bounds that link the approximation error to spectral and information properties of the observation matrix;
    \item In Section~\ref{sec:algorithm}, we design a policy gradient algorithm for (general) MOE optimization, and we provide a variation including a principle regularization term to exploit knowledge of the observation matrix (but not the POMDP specification);
    \item In Section~\ref{sec:experiments}, we report an empirical validation that tests the introduced algorithms against an ideal baseline accessing the true states of the POMDP, which both upholds the algorithms' design and our theoretical findings.
\end{itemize}
Finally, in a concurrent work~\citep{zamboni2024explore} we explore state entropy maximization in POMDPs beyond MOE, studying methodologies that exploit observations to build \emph{beliefs} over the true states and then maximize the entropy of the \emph{believed} states.
With the combined contributions of this paper and~\citet{zamboni2024explore}, we hope to pave the way for future studies of state entropy maximization in partially observable environments.

\section{Preliminaries}
\label{sec:preliminaries}

In this section, we introduce the most relevant background and the basic notation.

\paragraph{Notation.}
In the following, we denote $[N] := \{1, 2, \ldots, N\}$ for a constant $N < \infty$. We denote a set with a calligraphic letter $\A$ and its size as $|\A|$. We denote $\A^T := \times_{t = 1}^T \A$ the $T$-fold Cartesian product of $\A$. The simplex on $\A$ is denoted as $\Delta_{\A} := \{ p \in [0, 1]^{|\A|} | \sum_{a \in \A} p(a) = 1 \}$ and $\Delta_{\A}^{\B}$ denotes the set of conditional distributions $p: \A \to \Delta_\B$. Let $X$ a random variable on the set of outcomes $\X$ and corresponding probability measure $p_X$, we denote as $H_\alpha(X) = \frac{1}{1-\alpha} \log(\sum_{x \in \X} p_X (x)^\alpha)$ the R\`enyi entropy of order $\alpha$, from which we recover the Shannon entropy $H (X) = \lim_{\alpha \to 1} H_\alpha (X) = - \sum_{x \in \X} p_X (x) \log (p_X (x))$ and the min-entropy $H_\infty (X) = \lim_{\alpha \to \infty} H_\alpha (X) = - \log (\max_{x \in \X} p_X (x))$.
We denote $\vx = (X_1, \ldots, X_T)$ a random vector of size $T$ and $\vx[t]$ its entry at position $t \in [T]$.
For a vector $v \in \mathbb{R}^N$ we denote $\| v \|_\infty := \max_{i \in [N]} v_i$. For a matrix $\V \in \mathbb{R}^{N \times M}$ we denote $\|\V\|_\infty := \max_{ij \in [N] \times [M]} V_{ij}$ its infinity norm, $\V^*$ its conjugate transpose and $\V^{\circ - 1}$ its Hadamard inverse, such that $V_{ij}^{\circ - 1} = 1 / V_{ij} \ \forall i,j$. We further denote $\lambda (\V), \sigma (\V)$, the vectors of eigenvalues and singular values of $\V$, respectively. We denote the spectral norm of $\V$ as $\| \V \|_2 := \sqrt{\lambda_{\max} (\V^* \V)} = \sigma_{\max} (\V)$ where $\lambda_{\max} (\V) := \| \lambda (\V) \|_\infty$ and $ \sigma_{\max} (\V) := \| \sigma (\V) \|_\infty$.

As a base model for interaction, we consider a finite-horizon Partially Observable Markov Decision Process~\cite[POMDP,][]{aastrom1965optimal} without rewards. A POMDP $\mathbb{M} := (\S, \X, \A, \Obs, \P, \mu, T)$ is composed of a set $\S$ of latent states, a set $\X$ of observations, and a set of actions $\A$, which we let discrete and finite with size $|\S|, |\X|, |\A|$ respectively. At the start of an episode, the initial state $s_1$ of $\mathbb{M}$ is drawn from an initial state distribution $\mu \in \Delta_{\S}$. An agent interacting with $\mathbb{M}$ never accesses the true state of the system but an observation $x_1 \sim \Obs (\cdot | s_1)$ where $\Obs \in \Delta_{\X}^{\S}$ is the \emph{observation function}.\footnote{With slight overload of notation, we will equivalently represent the observation function through a stochastic matrix $\Obs \in \mathbb{R}^{|\S| \times |\X|}$.} Upon observing $x_1$, the agent takes action $a_1 \in \A$, the system transitions to $s_2 \sim \P(\cdot|s_1, a_1)$ according to the \emph{transition model} $\P \in \Delta_{\S}^{\S \times \A}$, and a new observation $x_2 \sim \Obs (\cdot | s_2)$ is generated. The process is repeated until $s_T$ is reached and $x_T$ is generated, being $T < \infty$ the horizon of an episode.

The agent selects actions according to a decision \emph{policy} $\pi \in \Delta_{\X}^{\A}$ such that $\pi (a | x)$ denotes the conditional probability of taking action $a$ upon observing $x$. Deploying a policy $\pi$ over a POMDP $\mathbb{M}$ induces a specific distribution over states and observations. Let denote as $S, X$ the random variables corresponding to the state and observation respectively, we have that $S$ is distributed as $p^\pi_S \in \Delta_{\S}$, where $p^\pi_S (s) = \frac{1}{T} \sum_{t \in [T]} Pr (s_t = s)$, and $X$ as $p^\pi_X \in \Delta_{\X}$, where $p^\pi_X (x) = \frac{1}{T} \sum_{t \in [T]} Pr (x_t = x)$. Further, it is easy to see that $p^\pi_X (x) = \sum_{s \in \S} p^\pi_S (s) \Obs (x | s)$. Furthermore, let us denote with $\vs, \vx, \va$ the random vectors corresponding to sequences of states, observations, and actions of length $T$, which are supported in $\S^T, \X^T, \A^T$ respectively. We have that $\vs$ is distributed as $q^\pi_S \in \Delta_{\S^T}$, where $q^\pi_S(\textbf{s}) = \prod_{t \in [T]}Pr(s_t = \textbf{s}[t])$, and $\vx,\va$ as $q^\pi_{XA} \in \Delta_{\X^T \times \A^T}$, where $q^\pi_{XA}(\vx,\va) = \prod_{t \in [T]}Pr(x_t = \vx[t], a_t = \va[t])$. Finally, we denote the empirical distributions induced by $\vs, \vx$ as $\hat p_S (s|\vs) = \frac{1}{T} \sum_{t \in [T]}  \mathds{1} (\vs[t] = s)$ and $\hat p_X (x|\textbf{x}) = \frac{1}{T} \sum_{t \in [T]} \mathds{1} (\textbf{x}[t] = x)$, which does not depend on $\pi$ due to the conditioning on $\vs, \vx$.

\section{Problem Formulation}
\label{sec:problem_formulation}

In the MDP setting, i.e., observations coincide with the true states of the state of the system, \citet{hazan2019provably} have formulated the \emph{Maximum State Entropy} (MSE) objective as follows
\begin{equation}
\label{eq:mse}
    \max_{\pi \in \tilde\Pi} \ \Big\{ H (S | \pi) := - \sum\nolimits_{s \in \S} p^\pi_S (s) \log p^\pi_S (s) \Big\}
\end{equation}
where $\tilde\Pi \subseteq \Delta_{\S}^{\A}$ is the set of Markovian policies from states to distribution over actions, and $H(S | \pi)$ is the entropy of the state variable $S$ ``conditioned'' on running the policy $\pi$ in the MDP. When the MDP is fully known, \citet{hazan2019provably} shows that~\eqref{eq:mse} is non-convex, but it admits a convex dual formulation, which is optimized by a stochastic Markovian policy in general.

In principle, we aim to address the same objective~\eqref{eq:mse} in POMDPs as well. However, in the POMDP setting, we cannot access the true states, which are latent, but we have to rely on partial observations generated from those states. Thus, a straightforward adaptation of~\eqref{eq:mse} to POMDPs is to define an analogous objective on observations as a proxy for $H(S|\pi)$, which we cannot access. We define the \emph{Maximum Observation Entropy} (MOE) objective as follows
\begin{equation}
\label{eq:moe}
    \max_{\pi \in \Pi} \ \Big\{ H (X | \pi) := - \sum\nolimits_{x \in \X} p^\pi_X (x) \log p^\pi_X (x) \Big\}
\end{equation}
where $\Pi \subseteq \Delta_{\S}^{\A}$ is the set of Markovian policies from observations to distribution over actions, and $H(X | \pi)$ is the entropy of the observation $X$ ``conditioned'' on running the policy $\pi$ in the POMDP. 

Similarly, as in MDPs, we aim to find a policy $\pi$ that maximizes~\eqref{eq:moe}, but we are actually interested in achieving a good performance on~\eqref{eq:mse}. It is easy to see how the value of~\eqref{eq:moe} can depart significantly from the true objective~\eqref{eq:mse}. Take, for instance, an observation matrix that maps every state to the same observation $\Obs (\bar x | s) = 1 \ \forall s \in \S$. It is clear that every policy is optimal for MOE in this setting, but the entropy on the true states can be arbitrarily bad. While those extreme cases are rather unrealistic, the observation matrix can be truly messed up in practice. We want to understand what are the settings that are worth addressing with MOE and what kind of guarantees we can get. In the following section, we provide answers to these questions by deriving theoretical bounds on the approximation of MSE with MOE that depends on crucial properties of the observation matrix.
\section{A Formal Characterization of Maximum Observation Entropy}
\label{sec:characterization}

In this section, we aim to characterize the gap $H(S|\pi) - H (X|\pi)$ induced by a chosen policy $\pi$, e.g., the policy that maximizes the MOE objective~\eqref{eq:moe}.
Due to the POMDP nature, in which only partial information (if any) on the true states is leaked to the agent, we cannot provide any general guarantee on the latter gap, which can be as large as
\begin{equation}
\label{eq:instance_independent_gap}
    |H(S|\pi)- H(X|\pi)| \leq \max \{ \log |\S|, \log |\X| \}.
\end{equation}
Nonetheless, we can provide \emph{instance-dependent} results that formally characterize the gap according to notable properties of the observation function in the given instance. First, we prove the following.
\begin{theorem}[Spectral Approximation Bounds]
\label{thr:spectral_bounds}
    Let $\mathbb{M}$ a POMDP and let $\pi \in \Pi \subseteq \Delta_{\X}^{\A}$ a policy. Then, it holds
    \begin{equation*}
        \log \left( \frac{1}{\sigma_{\max} (\Obs^{\circ -1})} \right)
        \leq 
        H (S | \pi) - H (X| \pi)
        \leq 
        \log ( \sigma_{\max} (\Obs) ).
    \end{equation*}
\end{theorem}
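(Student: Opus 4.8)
The plan is to reduce everything to the single linear relation $p^\pi_X = \Obs^\top p^\pi_S$ between the state- and observation-visitation distributions (established in the preliminaries) and to read both inequalities as statements about how much this simplex-preserving linear map can concentrate or spread probability mass, measured spectrally. Since entropy is a monotone functional of the ``peakedness'' of a distribution, I would control the gap $H(S \mid \pi) - H(X \mid \pi)$ through the ratio of the $\ell_2$ masses $\| p^\pi_X \|_2$ and $\| p^\pi_S \|_2$, and only translate back into entropy at the very end.

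For the upper bound, the observation map $p^\pi_S \mapsto \Obs^\top p^\pi_S$ is a bounded linear operator whose $\ell_2 \to \ell_2$ norm is exactly $\sigma_{\max}(\Obs^\top) = \sigma_{\max}(\Obs)$, since a matrix and its transpose share their singular values. This immediately yields $\| p^\pi_X \|_2 \le \sigma_{\max}(\Obs)\, \| p^\pi_S \|_2$: the emission process cannot increase the $\ell_2$ mass by more than a factor $\sigma_{\max}(\Obs)$, so the less the observations can concentrate the state mass, the closer the two entropies. I would then convert this norm inequality into the claimed bound on $H(S \mid \pi) - H(X \mid \pi)$.

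The lower bound is the more delicate and, I think, the nicer half, and here I would exploit the entrywise structure rather than the operator. Writing the joint mass $r(s,x) = p^\pi_S(s)\, \Obs(x \mid s)$, its observation marginal satisfies $p^\pi_X(x) = \sum_{s} r(s,x) \ge r(s,x)$, so for every $s$ and every $x$ with $\Obs(x \mid s) > 0$ we obtain the pointwise domination $p^\pi_S(s) \le p^\pi_X(x)/\Obs(x \mid s) = \Obs^{\circ -1}(x \mid s)\, p^\pi_X(x)$. Summing the nonnegative right-hand side over $x$ gives the componentwise inequality $p^\pi_S \le \Obs^{\circ -1} p^\pi_X$, whence $\| p^\pi_S \|_2 \le \| \Obs^{\circ -1} p^\pi_X \|_2 \le \sigma_{\max}(\Obs^{\circ -1})\, \| p^\pi_X \|_2$. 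This is the exact mirror image of the upper-bound estimate, with $\Obs$ replaced by its Hadamard inverse, and it is precisely where $\sigma_{\max}(\Obs^{\circ -1})$ enters.

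The step I expect to be the main obstacle is the final translation from these two $\ell_2$/operator-norm inequalities into the stated entropy gap. The natural bridge is the collision (order-$2$, R\'enyi) entropy, for which $\log \| p \|_2$ is, up to sign and scale, the entropy itself, so the norm inequalities convert transparently into bounds on $H_2(S \mid \pi) - H_2(X \mid \pi)$; carrying this over to the Shannon entropy in the statement with the precise constant $\log \sigma_{\max}(\cdot)$ is the part requiring care, and I would handle it by tracking the order-$\alpha$ estimate and passing to the Shannon limit. Two sanity checks would guide the constants: when $\Obs$ is doubly stochastic, $\sigma_{\max}(\Obs) = 1$ and the upper bound recovers $H(X \mid \pi) \ge H(S \mid \pi)$ as predicted by majorization; and when $\Obs$ has a zero entry, $\Obs^{\circ -1}$ diverges and the lower bound degenerates to $-\infty$, correctly signalling that fully uninformative observations (the all-states-to-one-observation example of Section~\ref{sec:problem_formulation}) leave no guarantee on the true-state entropy.
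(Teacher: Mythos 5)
Your two spectral estimates are individually sound: $\norm{p^\pi_X}_2 \le \sigma_{\max}(\Obs)\norm{p^\pi_S}_2$ is exactly how the paper opens its upper bound, and the componentwise domination $p^\pi_S(s) \le \Obs^{\circ -1}(x|s)\, p^\pi_X(x)$, hence $\norm{p^\pi_S}_2 \le \sigma_{\max}(\Obs^{\circ -1})\norm{p^\pi_X}_2$, is a genuinely different entry point for the lower bound than the paper's. The gap is that the step you defer as ``the part requiring care'' is not a technicality but the entire content of the theorem, and the fix you sketch cannot work. Your norm inequalities are equivalent to \emph{collision-entropy} statements (bounds on $H_2(S|\pi)-H_2(X|\pi)$), and R\'enyi monotonicity $H \ge H_2 \ge H_\infty$ helps on exactly one side of each chain and hurts on the other: for the upper bound you get $H(X|\pi)\ge H_2(X|\pi) \ge H_2(S|\pi) - 2\log\sigma_{\max}(\Obs)$, but you would then need $H_2(S|\pi)\ge H(S|\pi) - \mathrm{const}$, which is false in general --- the gap $H - H_2$ can be of order $\log|\S|$ (one atom of mass $1/2$ plus $2^k$ atoms of mass $2^{-k-1}$). ``Tracking the order-$\alpha$ estimate and passing to the Shannon limit'' does not repair this, because $\sigma_{\max}$ is intrinsically the $\ell_2\to\ell_2$ operator norm; there is no order-$\alpha$ analogue of your inequalities carrying the same constant that survives $\alpha\to 1$. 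Indeed, no complete argument for the stated upper bound can exist: with the all-states-to-one-observation matrix and uniform $p^\pi_S$, the gap is $\log|\S|$ while $\log\sigma_{\max}(\Obs)=\tfrac12\log|\S|$. Accordingly, the paper itself does not prove the upper bound for all $\pi$: after the same $\ell_2$ step it interpolates $H(S|\pi)$ against $H_\infty(S|\pi)$ via $\norm{p^\pi_S}_\infty$, obtains the $p^\pi_S$-dependent bound~\eqref{eq:tight_spectral_upper_bound}, and recovers the stated inequality only in the degenerate limit $\norm{p^\pi_S}_\infty \to 1$.

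The lower bound, by contrast, does admit a complete proof, and the paper's route differs from yours precisely in never leaving Shannon entropy: it writes $1/p^\pi_X(x) = \sum_s p^\pi_S(s) \big/ \sum_s p^\pi_S(s)\Obs(x|s)$, applies the log-sum inequality to get $H(X|\pi) \le H(S|\pi) + \EV_{x \sim p^\pi_X}\EV_{s \sim p^\pi_S}\big[\log \Obs^{\circ -1}(x|s)\big]$, bounds the expectation by the largest entry of $\Obs^{\circ -1}$, and finishes with the elementary fact that the largest entry of a matrix is at most its spectral norm. If you want to salvage your half, the right move is to use your pointwise domination \emph{inside the Shannon sum} --- take logarithms of $p^\pi_S(s) \le \Obs^{\circ -1}(x|s) p^\pi_X(x)$ and average --- rather than routing through $\norm{\cdot}_2$; as written, both halves of your proposal stop one genuine, and in the suggested form unfillable, step short of the claimed entropy bounds.
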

\begin{proof}
    First, we derive the upper bound. Starting from $H(X|\pi)$, we have
    \begin{align}
        H (X | \pi) \geq H_2 &(X | \pi) = \log \left(\frac{1}{\norm{p^\pi_X}_2}\right) = \log \left( \frac{1}{\norm{\Obs \cdot p^\pi_S}_2} \right) \label{eq:spectral_1} \\
        &\geq \log \left(\frac{1}{\norm{\Obs}_2\norm{p_S^\pi}_2} \right) = \log \left(\frac{1}{\norm{p_S^\pi}_2} \right) + \log \left(\frac{1}{\norm{\Obs}_2} \right) =H_2(S | \pi) - \log \left(\sigma_{\max}(\Obs)\right)
    \end{align}
    where the first inequality comes from $H (V) \geq H_2 (V)$ for every variable $V$ and the second inequality from $\norm{\V \cdot v}_2 \leq \norm{\V}_2\norm{v}_2$ for every matrix $\V$ and vector $v$. Then, starting from $H(S|\pi)$, we get
    \begin{align}
        H (S|\pi) = \|p_S^\pi\|_\infty  \log \left(\frac{1}{\|p_S^\pi\|_\infty }\right) &+ \sum_{s: p_S^\pi (s) < \| p_S^\pi \|_\infty} p_S^\pi (s) \log\left(\frac{1}{p_S^\pi (s)}\right) \label{eq:spectral_2} \\
        &\qquad\qquad\leq \|p_S^\pi\|_\infty  H_\infty(S| \pi) + (1 - \|p_S^\pi\|_\infty )\log\left(\frac{|\S|-1}{1-\|p_S^\pi\|_\infty }\right)
    \end{align}
    where the inequality is obtained by letting $p^\pi_S$ be uniformly distributed outside of the entry $\|p_S^\pi\|_\infty$.
    By noting $H_\infty (V) \leq H_2 (V)$ and plugging \eqref{eq:spectral_2} back to \eqref{eq:spectral_1} we get
    \begin{equation}
        H(X | \pi) \geq \frac{ H (S | \pi) }{ \|p_S^\pi\|_\infty} + \frac{\|p_S^\pi\|_\infty - 1}{\|p_S^\pi\|_\infty} \log \left( \frac{|\S| - 1}{1 - \|p_S^\pi\|_\infty} \right) + \log \left( \frac{1}{\sigma_{\max} (\Obs)} \right) \label{eq:tight_spectral_upper_bound}
    \end{equation}
    which gives the result for $\| p^\pi_S \|_\infty \to 1$.\footnote{Note that~\eqref{eq:tight_spectral_upper_bound} is a tighter version of the upper bound than the one provided in the theorem statement, although it directly depends on the state distribution $p^\pi_S$ beyond spectral properties of $\Obs$. }

    To derive the lower bound, we proceed as follows. We start from the $H(X|\pi)$ definition to write
    \begin{align}
        H & (X|\pi) = \sum_{x \in \X} p^\pi_X (x) \log \left(\frac{1}{p^\pi_X (x)} \right)
        = \sum_{x \in \X} p^\pi_X(x) \log \left(\frac{\sum_{s \in \S} p^\pi_S(s)}{\sum_s p^\pi_S(s)\Obs(x|s)}\right)\sum_{s \in \S} p^\pi_S(s) \label{eq:spectral_3} \\
        &\leq \sum_{x \in \X} p^\pi_X(x) \sum_{s \in \S} p^\pi_S (s) \log \left(\frac{ p^\pi_S(s)}{ p^\pi_S(s)\Obs(x|s)} \right)
        = H(S|\pi) + \sum_{x \in \X} p^\pi_X(x) \sum_{s \in \S} p^\pi_S(s) \log \left(\frac{ p^\pi_S(s)}{\Obs(x|s)}\right) \label{eq:spectral_5} \\
        &\leq H(S | \pi) +  \EV_{x \sim p_X^\pi} \EV_{s \sim p_S^\pi} \left[ \log ( \Obs^{\circ -1} (x | s) ) \right] 
        \leq H(S | \pi) + \log \left( \max_{x \in \X} \max_{s \in \S} \Obs^{\circ -1} (x | s) \right) \label{eq:spectral_7} \\
        &\leq H(S | \pi) + \log \left( \sigma_{\max} ( \Obs^{\circ -1} ) \right) \label{eq:spectral_8}
    \end{align}
    where we exploit $p_X^\pi (x) = \sum_{s \in \S} p_S^\pi (s) \Obs (x | s)$ and $\sum_{s \in \S} p_S^\pi (s) = 1$ to write~\eqref{eq:spectral_3}, we first apply the log-sum inequality and we split the logarithm to get~\eqref{eq:spectral_5}. Then, in~\eqref{eq:spectral_7}, we write the first inequality through the definition of the Hadamard inverse of $\Obs$ and noting that $p^\pi_S (s) \leq 1 \ \forall s \in \S$, we get the second inequality from $\EV [V] \leq \max (V)$ for any random variable $V$ and the monotonicity of the logarithm. Finally, we obtain the result~\eqref{eq:spectral_8} by $\| \mathbb{V} \|_\infty \leq \| \mathbb{V} \|_2 = \sigma_{\max} (\mathbb{V})$ for any matrix $\mathbb{V}$.
\end{proof}

Theorem~\ref{thr:spectral_bounds} gives bounds on the approximation gap that can be much tighter than the worst-case gap in~\eqref{eq:instance_independent_gap}. The bounds relate the gap to the scale of the transformation induced by the observation matrix on the distribution of the latent states, which is captured by the maximum singular value of $\Obs$ and $\Obs^{\circ -1}$, respectively. Notably, both sides of the bounds collapse to zero when the observation matrix is an identity matrix, i.e., when the states are fully observed. 

The bounds in Theorem~\ref{thr:spectral_bounds} only focus on spectral properties of the observation matrix $\Obs$. In a similar vein, we can provide an analogous characterization based on information properties of $\Obs$.

\begin{theorem}[Information Approximation Bound]
\label{thr:information_bound}
    Let $\mathbb{M}$ a POMDP, let $\pi \in \Pi \subseteq \Delta_{\X}^{\A}$ a policy, and let $H(X|S, \pi) = \EV_{s \sim p^\pi_S} [H(\Obs(\cdot|s))]$. Then, it holds
    \begin{equation*}
        H (S | \pi) \geq H (X| \pi) - H (X | S, \pi).
    \end{equation*}
\end{theorem}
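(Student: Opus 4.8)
The plan is to recognize the right-hand side $H(X|\pi) - H(X|S,\pi)$ as the mutual information between the (time-averaged) state and observation variables, and then conclude via two elementary information-theoretic facts: the entropy chain rule and the non-negativity of conditional entropy. First I would set up the joint distribution of the pair $(S,X)$. Since each observation $x_t$ is drawn from $\Obs(\cdot|s_t)$ depending only on the contemporaneous state, the time-averaged joint factorizes as $p^\pi_{SX}(s,x) = p^\pi_S(s)\Obs(x|s)$, whose $X$-marginal is exactly $p^\pi_X(x) = \sum_{s\in\S} p^\pi_S(s)\Obs(x|s)$ and under which the standard conditional entropy coincides with the given $H(X|S,\pi) = \EV_{s\sim p^\pi_S}[H(\Obs(\cdot|s))]$.

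With the joint in hand, the key step is the chain rule for entropy, $H(S,X|\pi) = H(S|\pi) + H(X|S,\pi)$, which follows directly from the factorization above. I would then invoke monotonicity of the joint entropy of discrete variables, equivalently the fact that $H(S|X,\pi)\geq 0$, to obtain $H(X|\pi) \leq H(S,X|\pi)$. Chaining these two relations gives $H(X|\pi) \leq H(S|\pi) + H(X|S,\pi)$, and rearranging yields the claim. Put differently, the slack between the two sides is precisely $H(S|X,\pi)$, the residual uncertainty about the latent state after observing $X$, so the bound is tight exactly when the observation determines the state.

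The main obstacle is not the inequality itself, which is standard once the quantities are viewed as entropies of a genuine joint distribution, but rather the bookkeeping needed to justify that the time-averaged marginals $p^\pi_S, p^\pi_X$ arise from a single well-defined joint $p^\pi_{SX}$. Concretely, I would verify that averaging over $t\in[T]$ commutes with the emission structure, i.e., that $\frac{1}{T}\sum_{t\in[T]} Pr(s_t=s,x_t=x) = p^\pi_S(s)\Obs(x|s)$, which holds because $\Obs$ is time-homogeneous and conditionally independent of everything given $s_t$. Once this factorization is established, the chain rule and monotonicity apply verbatim, so I expect the remainder to be routine.
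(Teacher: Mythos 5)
Your proof is correct and is essentially the same argument as the paper's: the paper's term-wise inequality $\sum_{s'}\Obs(x|s')p^\pi_S(s') \geq \Obs(x|s)p^\pi_S(s)$ followed by splitting the logarithm is exactly an inline derivation of your two facts, namely $H(X|\pi) \leq H(S,X|\pi)$ (non-negativity of $H(S|X,\pi)$) and the chain rule $H(S,X|\pi) = H(S|\pi) + H(X|S,\pi)$ applied to the time-averaged joint $p^\pi_S(s)\Obs(x|s)$. Your abstract phrasing additionally makes the slack $H(S|X,\pi)$ explicit, but the underlying decomposition is identical.
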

\begin{proof}
    Starting from $H(X | \pi)$ definition, we can write
    \begin{align}
        H (X | \pi)  &= \sum_{x \in \X} p^\pi_X(x) \log \frac{1}{p^\pi_X(x)} 
        = \sum_{x \in \X} \sum_{s \in \S}\Obs(x|s)p^\pi_S(s)\log \frac{1}{\sum_{s' \in \S}\Obs(x|s')p^\pi_S(s')} \label{eq:information_2} \\
        &\leq \sum_{x \in \X}\sum_{s \in \S} \Obs(x|s) p^\pi_\S(s)\log \frac{1}{\Obs(x|s)p^\pi_S(s)} \label{eq:information_3} \\
        &= \sum_{x \in \X}\sum_{s \in \S} \Obs(x|s) p^\pi_\S(s)\log \frac{1}{p^\pi_S(s)}+ \sum_{x \in \X}\sum_{s \in \S} \Obs(x|s) p^\pi_S(s)\log \frac{1}{\Obs(x|s)} \label{eq:information_4} \\
        &= H (S | \pi) + \sum_{s \in \S} p^\pi_S (s) H (\Obs (\cdot | s)) = H (S | \pi) + H(X|S,\pi) \label{eq:information_5}
    \end{align}
    where we get~\eqref{eq:information_3} by noting $\sum_{s'\in \S}\Obs(x|s')p^\pi_S(s') \geq  \Obs(x|s)p^\pi_S(s)$, we split the logarithm to write~\eqref{eq:information_4}, we let $\sum_{x \in \X} \Obs (x | s) = 1$ and $\sum_{s \in \S} p_S^\pi (s) H (\Obs (\cdot | s)) = H(X|S, \pi)$ to obtain the result in~\eqref{eq:information_5}.
\end{proof}

Theorem~\ref{thr:information_bound} essentially states that the gap between the entropy on observations and true states is small as long as the policy $\pi$ induces visits to states where the observation function has low entropy, which is captured by the term $H(X|S,\pi) = \EV_{s \sim p_S^\pi}[H(\Obs (\cdot|s))]$. Just as Theorem~\ref{thr:spectral_bounds}, also the latter bound is tight when the true states are fully observed, collapsing the gap to zero.

The combination of Theorems~\ref{thr:spectral_bounds},~\ref{thr:information_bound} yield a nice description of the instances that is reasonable to address with a MOE approach, i.e., those for which the gap between the resulting policy and the optimal MSE policy is small thanks to the properties of the observation matrix. Unfortunately, policies in POMDPs have control over neither the spectral properties of the observation function nor whether the visited states have low-entropy observation distributions. In other words, while being descriptive, these results do not provide any further tool to actively address MSE in POMDPs.
In the next section, we reformulate the bound in Theorem~\ref{thr:information_bound} around quantities that can be actively controlled by a policy conditioned on observations and we provide a family of policy gradient algorithms to learn a MOE policy in those relevant instances.

Before diving into algorithmic solutions, it is interesting to confront the properties making a state entropy maximization problem on POMDPs easy and analogous requirements for RL in POMDPs. In the latter setting, we generally ask for an observation function that leaks significant information on the latent state. For instance, this is captured by a lower bound on the minimum singular value of $\Obs$ in the \emph{revealing} POMDP assumption~\citep[e.g.,][]{liu2022partially}. Instead, in state entropy maximization, we care less about identifying the latent state, and we can just focus on observations as long as $\Obs$ does not dramatically jeopardize the underlying state distribution.

\section{Towards Principled Policy Gradients for MOE}
\label{sec:algorithm}

In the previous section, we analyzed the theoretical guarantees we get on the state entropy maximization problem by optimizing the MOE objective~\eqref{eq:moe}, but we did not yet describe how the latter optimization can be performed. Here we propose a family of Policy Gradient algorithms~\citep{peters2008reinforcement} to learn a MOE policy from sampled interactions with the POMDP.

First, we define a space of \emph{parametric} policies $ \pi_\theta \in \Pi_\Theta \subseteq \Pi$ where $\theta \in \Theta \subseteq \mathbb{R}^{|\X||\A|}$ are differentiable policy parameters.\footnote{See~\citet[][Section 1.3]{deisenroth2013survey} for common choices of parametric policy spaces.} The expression of the MOE objective does not allow for an easy computation of policy gradients. However, if we let $H(X|\vx) = -\sum_{x \in \X} \hat p_X (x|\textbf{x}) \log \hat p_X (x|\textbf{x})$ the observation entropy induced by a sequence of observations $\vx$, we can write a convenient trajectory-based counterpart of~\eqref{eq:moe}, namely
\begin{equation}
\label{eq:moe_trajectory}
    \max_{\pi_\theta \in \Pi_\Theta} \ \Big\{ \vH (X | \pi_\theta) :=
    \sum\nolimits_{ ( \vx, \va ) \in \X^T \times \A^T} q^{\pi_\theta}_{XA} (\vx, \va)H(X|\vx)\Big\}.
\end{equation}
Notably, the trajectory-based objective~\eqref{eq:moe_trajectory} is a lower bound to the MOE objective~\eqref{eq:moe}, due to the concavity of the entropy function and the Jensen's inequality~\citep[see][]{mutti2022challenging}. Thus, optimizing for~\eqref{eq:moe_trajectory} guarantees a non-degradation of our initial objective function~\eqref{eq:moe}, while it allows for an easy derivation of the gradient $\nabla_\theta$ w.r.t. the policy parameters.\footnote{The proof can be found in Appendix~\ref{apx:theory}.}
\begin{proposition}[Policy Gradient for MOE]
\label{prop:policy_gradient}
    Let $\pi_\theta \in \Pi_\Theta$ a parametric policy and let the \emph{policy scores} \emph{$\nabla_\theta \log \pi_\theta (\vx, \va) = \sum_{t \in [T]} \nabla_\theta \log \pi_\theta(\va[t] |\vx[t])$}. We can compute the \emph{policy gradient} of $\pi_\theta$ as
    \begin{equation}
        \nabla_\theta  \mathbf{H} (X | \pi_\theta) = \EV_{(\vx, \va) \sim q^\pi_{XA}} \Big[\nabla_\theta \log \pi_\theta (\vx,\va)H(X|\vx)\Big].
    \end{equation}
\end{proposition}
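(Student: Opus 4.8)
The plan is to apply the standard policy gradient (REINFORCE-style) argument to the trajectory-based objective~\eqref{eq:moe_trajectory}. The key observation is that the per-trajectory entropy $H(X|\vx)$ depends only on the realized sequence $\vx$ and \emph{not} on the policy parameters $\theta$; all the $\theta$-dependence lives in the sampling distribution $q^{\pi_\theta}_{XA}(\vx,\va)$. This is exactly the structure that makes the log-derivative trick applicable, and it is the reason the authors bothered to pass to the trajectory objective~\eqref{eq:moe_trajectory} in the first place rather than differentiating $H(X|\pi)$ directly (where the entropy would depend on $\theta$ through $p^\pi_X$).

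First I would write $\vH(X|\pi_\theta) = \sum_{(\vx,\va)} q^{\pi_\theta}_{XA}(\vx,\va) H(X|\vx)$ and push the gradient inside the (finite) sum, yielding $\nabla_\theta \vH(X|\pi_\theta) = \sum_{(\vx,\va)} \nabla_\theta q^{\pi_\theta}_{XA}(\vx,\va)\, H(X|\vx)$, using that $H(X|\vx)$ is $\theta$-independent. Next I would invoke the log-derivative identity $\nabla_\theta q^{\pi_\theta}_{XA} = q^{\pi_\theta}_{XA}\, \nabla_\theta \log q^{\pi_\theta}_{XA}$, which converts the sum over trajectories back into an expectation under $q^{\pi_\theta}_{XA}$, giving $\nabla_\theta \vH(X|\pi_\theta) = \EV_{(\vx,\va)\sim q^{\pi_\theta}_{XA}}[\nabla_\theta \log q^{\pi_\theta}_{XA}(\vx,\va)\, H(X|\vx)]$.

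The remaining step is to show that $\nabla_\theta \log q^{\pi_\theta}_{XA}(\vx,\va)$ reduces to the claimed policy score $\sum_{t\in[T]} \nabla_\theta \log \pi_\theta(\va[t]\mid\vx[t])$. Here I would factorize the trajectory likelihood: the joint probability of $(\vx,\va)$ factors into an emission/transition part governed by $\mu$, $\P$, and $\Obs$, and a policy part $\prod_{t\in[T]} \pi_\theta(\va[t]\mid\vx[t])$. Taking $\log$ turns the product into a sum, and since only the policy factors carry $\theta$, the environment terms vanish under $\nabla_\theta$; what survives is precisely $\sum_{t} \nabla_\theta \log \pi_\theta(\va[t]\mid\vx[t])$, which the proposition names $\nabla_\theta \log \pi_\theta(\vx,\va)$. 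Substituting this back closes the argument.

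The only genuine subtlety — and the step I would be most careful about — is the trajectory factorization under partial observability. I must verify that the observation and transition dynamics factor out cleanly despite the policy conditioning on $\vx[t]$ rather than on latent states, so that $\nabla_\theta$ annihilates every non-policy factor; this is where one should confirm there is no hidden $\theta$-dependence smuggled in through the induced state-observation process. Everything else is routine: the interchange of $\nabla_\theta$ and the finite sum needs no regularity beyond differentiability of $\pi_\theta$ (finite state, observation, and action spaces with finite horizon $T$ make the sum finite), and the log-derivative trick is elementary.
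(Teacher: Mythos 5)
Your proposal is correct and follows essentially the same route as the paper's proof: push the gradient through the finite trajectory sum (using that $H(X|\vx)$ is $\theta$-independent), apply the log-derivative trick to recover an expectation under $q^{\pi_\theta}_{XA}$, and factorize the trajectory likelihood so that only the policy factors $\pi_\theta(\va[t]\,|\,\vx[t])$ survive differentiation. Your flagged subtlety is handled correctly — and in fact more carefully than in the paper, whose factorization of $q^{\pi_\theta}_{XA}$ is written loosely, while your version (environment factors governed by $\mu$, $\P$, $\Obs$ carrying no $\theta$-dependence once the realized history is fixed) is the precise justification.
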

With the latter result, we can design a policy gradient algorithm based on REINFORCE~\citep{williams1992simple}. The procedure, described in Algorithm~\ref{alg:pg_pomdp}, initializes the policy parameters and then performs several iterations of gradient ascent updates.  As we shall see in the next section, Algorithm~\ref{alg:pg_pomdp} can be a simple yet effective solution to MOE optimization in various settings. However, the resulting policy can be underwhelming in domains where the observation matrix is particularly challenging. While we cannot overcome the barriers established in Theorems~\ref{thr:spectral_bounds},~\ref{thr:information_bound}, we can still exploit additional information on the observation function to further improve the performance.

\begin{algorithm}[t]
\caption{PG for MOE ({\color{purple} \textbf{Reg-MOE}})} \label{alg:pg_pomdp}
\begin{algorithmic}[1]
    \STATE \textbf{Input}: learning rate $\alpha$, number of iterations $K$, batch size $N$
    \STATE Initialize the policy parameters $\theta_1$
    \FOR{$k = 1, \ldots, K$}
        \STATE Sample $N$ trajectories $\{(\textbf{x}_i,\textbf{a}_i)\}_{i \in [N]}$ with the policy $\pi_{\theta_k}$
        \STATE Compute $\{H(X|\textbf{x}_i)\}_{i \in [N]}$ and  $\{ \nabla_\theta \log \pi_\theta (\textbf{x}_i,\textbf{a}_i) = \sum_{t\in [T]}\nabla_\theta \log \pi_\theta(\textbf{a}_i[t] | \textbf{x}_i[t])\}_{i \in [N]}$
        \STATE Update the policy parameters in the gradient direction \\
        {$ \qquad \ \theta_{k+1}  \leftarrow \theta_k + \alpha \frac{1}{N}\sum_i^N \nabla_\theta \log \pi_\theta (\textbf{x}_i,\textbf{a}_i ) \big(H(X|\textbf{x}_i)\textcolor{purple}{  - \beta\sum_{x \in \X} p_X (x|\textbf{x}_i) H (\Obs (x | \cdot)}\big)$}
    \ENDFOR
    \STATE \textbf{Output}: the final policy $\pi_{\theta_K}$
\end{algorithmic}
\end{algorithm}

\paragraph{Known Observation Matrix.}
With the knowledge of $\Obs$, we are tempted to directly optimize the lower bound to $H(S|\pi)$ provided in Theorem~\ref{thr:information_bound} by trading-off high entropy on observations ($H(X|\pi)$) with the entropy of their emission ($H(X|S, \pi)$). Unfortunately, we do not have access to the state distribution $d^\pi_S$ to compute the expectation $H(X|S, \pi) = \EV_{s \sim p^\pi_S} [H(\Obs(\cdot|s))]$. Nonetheless, we can rework the lower bound into an alternative form where all of the terms are known and can be controlled by a policy conditioned on observations only, as it demonstrates the following corollary to Theorem~\ref{thr:information_bound}.
\begin{corollary}[Actionable Lower Bound]\label{th:practical_bound}
    Let $\mathbb{M}$ a POMDP, let $\pi \in \Pi \subseteq \Delta_{\X}^{\A}$ a policy, and let $H(S|X, \pi) = \EV_{x \sim p^\pi_X} [H(\Obs(x | \cdot))]$. Then, it holds
    \begin{equation}  
         H (S | \pi) \geq  H (X | \pi) - H(S|X,\pi) + \log(\sigma_{\max} (\Obs)).
    \end{equation}
\end{corollary}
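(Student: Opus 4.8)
The plan is to obtain the statement as a refinement of Theorem~\ref{thr:information_bound}, which already gives $H(S|\pi) \geq H(X|\pi) - H(X|S,\pi)$ but through the \emph{row}-entropy term $H(X|S,\pi) = \EV_{s\sim p^\pi_S}[H(\Obs(\cdot|s))]$, averaged against the inaccessible latent marginal $p^\pi_S$. The goal is therefore to trade this state-averaged quantity for the observation-averaged \emph{column}-entropy term $H(S|X,\pi) = \EV_{x\sim p^\pi_X}[H(\Obs(x|\cdot))]$, which only involves $p^\pi_X$, at the cost of a spectral correction. Concretely, it suffices to prove the single inequality
\begin{equation*}
    H(X|S,\pi) \leq H(S|X,\pi) - \log(\sigma_{\max}(\Obs)),
\end{equation*}
after which chaining with Theorem~\ref{thr:information_bound} immediately yields the claimed lower bound.

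To attack this inequality I would first write both terms as linear functionals of $p^\pi_S$. Setting $g(x,s) := -\Obs(x|s)\log\Obs(x|s) \geq 0$ and collecting the row and column sums $r_s := \sum_x g(x,s)$, $h_x := \sum_s g(x,s)$ into vectors $\mathbf{r}$ and $\mathbf{h}$, one has $H(X|S,\pi) = \langle p^\pi_S, \mathbf{r}\rangle$, whereas, using $p^\pi_X = \Obs\, p^\pi_S$ exactly as in~\eqref{eq:spectral_1}, one gets $H(S|X,\pi) = \langle p^\pi_X, \mathbf{h}\rangle = \langle p^\pi_S, \Obs^\top \mathbf{h}\rangle$. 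The difference $H(X|S,\pi) - H(S|X,\pi) = \langle p^\pi_S, \mathbf{r} - \Obs^\top\mathbf{h}\rangle$ thus becomes a quantity to be controlled uniformly over $p^\pi_S \in \Delta_\S$, reducing the problem to comparing the row-entropy profile $\mathbf{r}$ with the reweighted column profile $\Obs^\top\mathbf{h}$.

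The spectral correction should then enter as it does in Theorem~\ref{thr:spectral_bounds}: the map carrying $p^\pi_S$ to $p^\pi_X$ is linear with operator norm $\sigma_{\max}(\Obs)$, so any passage between an $\EV_{s\sim p^\pi_S}$ quantity and the corresponding $\EV_{x\sim p^\pi_X}$ quantity can be charged a factor $\sigma_{\max}(\Obs)$ inside the logarithm, mirroring the step $\norm{\Obs\, p^\pi_S}_2 \leq \norm{\Obs}_2\norm{p^\pi_S}_2$ in~\eqref{eq:spectral_1}. This is the natural place to produce the additive $-\log(\sigma_{\max}(\Obs))$ term.

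I expect the main obstacle to be precisely this comparison. Both $H(X|S,\pi)$ and $H(S|X,\pi)$ weight the \emph{same} nonnegative table $g(x,s)$, but against two different marginals ($p^\pi_S$ versus $p^\pi_X = \Obs\, p^\pi_S$), and it is not transparent that the gap between $\mathbf{r}$ and $\Obs^\top\mathbf{h}$ is dominated by $\log(\sigma_{\max}(\Obs))$ for every admissible $p^\pi_S$; a naive singular-value substitution controls a ratio of $\ell_2$ norms rather than the required entropy difference. Making this rigorous is the delicate part, and it is where I would concentrate the effort, plausibly by specializing to the peaked regime $\|p^\pi_S\|_\infty \to 1$ in which the spectral bound of Theorem~\ref{thr:spectral_bounds} is tight and the two marginals are most directly related through $\Obs$.
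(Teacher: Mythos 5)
Your opening reduction is exactly the paper's: given Theorem~\ref{thr:information_bound}, the corollary follows once one shows $H(X|S,\pi) - H(S|X,\pi) \leq -\log(\sigma_{\max}(\Obs))$. The genuine gap is that you leave precisely this step unproven, and the machinery you build to attack it points away from the solution. The missing idea is the Bayes/chain rule for conditional entropy applied to the joint $p(s,x) = p^\pi_S(s)\,\Obs(x|s)$: the paper rewrites
\begin{equation}
    H(X|S,\pi) - H(S|X,\pi) \;=\; H(X|\pi) - H(S|\pi),
\end{equation}
so the difference you propose to control ``uniformly over $p^\pi_S$'' is not an inequality between the profiles $\mathbf{r}$ and $\Obs^\top\mathbf{h}$ that must be established afresh --- it collapses identically into the gap of \emph{marginal} entropies, which is exactly the quantity Theorem~\ref{thr:spectral_bounds} already bounds spectrally. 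Substituting this identity into Theorem~\ref{thr:information_bound} and invoking Theorem~\ref{thr:spectral_bounds} to handle $H(S|\pi) - H(X|\pi)$ is the paper's entire proof; the $\log(\sigma_{\max}(\Obs))$ term is inherited wholesale from the earlier spectral theorem rather than produced by a new operator-norm manipulation inside the conditional-entropy comparison.

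Your two fallback plans would also fail concretely. Charging a factor $\sigma_{\max}(\Obs)$ for every passage between $\EV_{s\sim p^\pi_S}$ and $\EV_{x\sim p^\pi_X}$ has no justification here: as you yourself observe, the singular value controls ratios of $\ell_2$ norms, whereas $\langle p^\pi_S, \mathbf{r}\rangle$ versus $\langle p^\pi_S, \Obs^\top\mathbf{h}\rangle$ compares two different weight vectors in an entropy (not a norm) metric, and without the chain-rule identity you have no bridge between the two. Specializing to the peaked regime $\|p^\pi_S\|_\infty \to 1$ is logically inadmissible as well: the corollary is asserted for \emph{every} policy $\pi \in \Pi$, hence for every induced state distribution, so verifying it on a boundary family of distributions proves nothing about the general case (that limit is a legitimate device in the proof of Theorem~\ref{thr:spectral_bounds} only because there it weakens an already-established bound into the stated form, not because the claim is restricted to that regime). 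As written, your proposal is a correct reduction followed by an unproven lemma, with the one tool that closes it --- the entropy chain rule --- absent.
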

\begin{proof}
    The result follows straightforwardly through further manipulation of Theorem~\ref{thr:information_bound}. We have,
    \begin{align}
         H (S | \pi) \geq  H (X | \pi) - H(X|S,\pi) &= H (X | \pi) - H(S|X,\pi) + H(S|\pi) - H(X|\pi) \label{eq:regularization_1}\\
        &\geq  H (X | \pi) -  \sum_{x \in \X} p^\pi_X (x) H (\Obs (x | \cdot)) +\log(\sigma_\text{max}(\Obs)) \label{eq:regularization_3}
    \end{align}
    where~\eqref{eq:regularization_1} is the result of the application of the Bayes rule to the conditional entropy $ H(X|S,\pi)$ and~\eqref{eq:regularization_3} follows from the fact that $ H (X | \pi) - H (S | \pi) \geq  -\log(\sigma_\text{max}(\Obs))$ due to Theorem~\ref{thr:spectral_bounds}.
\end{proof}
From the latter result, we get a lower bound to $H(S|\pi)$ that can be controlled, as we flipped the conditioning from $H(X|S,\pi)$ to $H(S|X, \pi)$, which we can compute by taking an expectation with the observation distribution. For every $\beta \in (0,1)$, we can write a regularized version of~\eqref{eq:moe_trajectory} as
\begin{equation}
    \vH\nolimits_\beta (X | \pi) := \sum\nolimits_{(\vx, \va) \in \X^T \times \A^T} q^\pi_{XA} (\vx,\va)\Big( H(X|\vx) - \beta\sum\nolimits_{x \in \X} \hat p_X (x|\vx) H (\Obs (x | \cdot)\Big),
\end{equation}
which we call \emph{Regularized MOE} (Reg-MOE), and a slight variation of the Algorithm~\ref{alg:pg_pomdp} (highlighted in the pseudocode) to optimize the regularized objective. In the next section, we provide an empirical validation of the proposed PG algorithms to describe their respective strengths and weaknesses. Note that the presented algorithms can be further enhanced with the same technical solutions of advanced policy optimization algorithms for the MSE objective~\citep[e.g.,][]{mutti2021task,liu2021behavior,seo2021state,yarats2021reinforcement} to address continuous and high-dimensional domains.

\section{Numerical Validation}
\label{sec:experiments}

Here we provide a brief numerical validation of the theoretical results provided in Section~\ref{sec:characterization} and the algorithmic solutions proposed in Section~\ref{sec:algorithm}. Especially, we aim to show that
\begin{itemize}[itemsep=0pt, leftmargin=16pt, topsep=0pt]
    \item[(a)] Optimizing MOE is particularly effective when the observation matrix is ``well-behaved'';
    \item[(b)] Optimizing MOE is bound to fail when the observation matrix is not ``well-behaved'';
    \item[(c)] Additional knowledge of the observation structure can be sometimes exploited to improve the performance in the latter challenging cases by optimizing the regularized MOE.
\end{itemize}
Intuitively, an observation matrix is ``well-behaved'' when it does not induce a significant transformation of the state distribution, keeping the approximation gap between MOE and MSE small. Thanks to Theorems~\ref{thr:spectral_bounds},~\ref{thr:information_bound} we can provide a formal characterization of this property. In the experiments below, we measure the latter through the average entropy of the observation function $\EV[H(\Obs)] = \sum_{s \in \S} H(\Obs (\cdot|s)) / |\S| $ on the lines of the information bound in Theorem~\ref{thr:information_bound}.

In Figure~\ref{subfig:image1} we test (a) by showing that the performance of the algorithms accessing observations only, i.e., \emph{PG for MOE} and \emph{PG for Reg-MOE}, is remarkably close to the ideal baseline having access to the true states, i.e., \emph{PG for MSE}. This is due to the low average entropy of the observation function: Although the agent cannot know its exact position, maximizing the entropy of observations still leads to a large entropy over the latent states.

This is not the case in the experiment in Figure~\ref{subfig:image2}, where the gridworld configuration is the same, but the observation function is now more challenging, i.e., more entropic on average. The significant gap between the algorithms optimizing MOE and the ideal baseline is a testament of (b) and a corroboration of the theoretical limits of the MOE approach, which are formally provided in Theorems~\ref{thr:spectral_bounds},~\ref{thr:information_bound}. \emph{PG for MOE} and \emph{PG for Reg-MOE} can still successfully maximize the entropy over observations, but cannot avoid a significant mismatch with the resulting entropy over latent states.

However, not all the domains with challenging (i.e., entropic) observations are hopeless for the MOE approach, especially when we can exploit knowledge on how the observations are themselves generated. In Figure~\ref{subfig:image3}, we report a further experiment in which the observation matrix has a block with very high entropy (in which observations are almost random) and a block with nearly deterministic observations. \emph{PG for MOE} does not exploit the structure of $\Obs$ and cannot distinguish between observations that are \emph{reliable} from those that are not. Instead, the regularization term in \emph{PG for Reg-MOE} leads to more visitations of reliable observations (i.e., generated with lower entropy) effectively reducing the gap with the ideal baseline (\emph{PG for MSE}), which corroborates both (c) and the result in Corollary~\ref{th:practical_bound}.

As a bottom line, this numerical validation shows that the MOE approach, while not being a solution to every POMDP instance, can still provide a remarkable performance on domains where the observation matrix is not too challenging or when its knowledge can be exploited.

\begin{figure*}[t]
    \centering \includegraphics[width=0.5\textwidth]{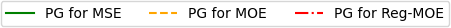}

    \begin{subfigure}[b]{0.3\textwidth}
        \includegraphics[width=\textwidth]{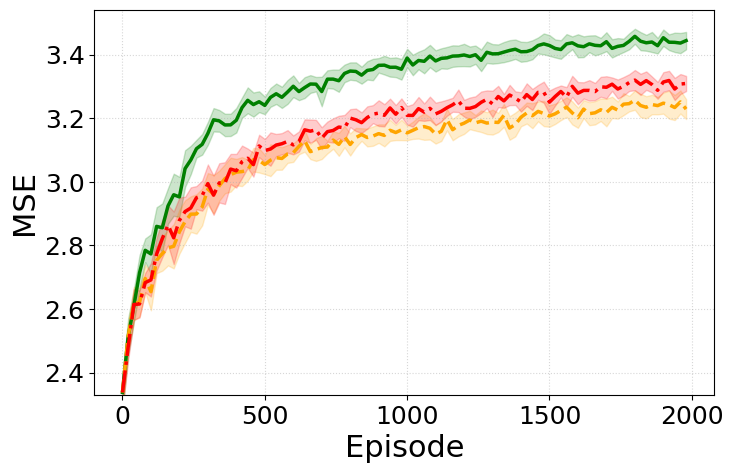}
        \vspace{-15pt}
        \caption{\centering Well-behaved observations $\EV [H (\Obs)] \approx 1$}
        \label{subfig:image1}
    \end{subfigure}
    \hfill
    \begin{subfigure}[b]{0.3\textwidth}
        \includegraphics[width=\textwidth]{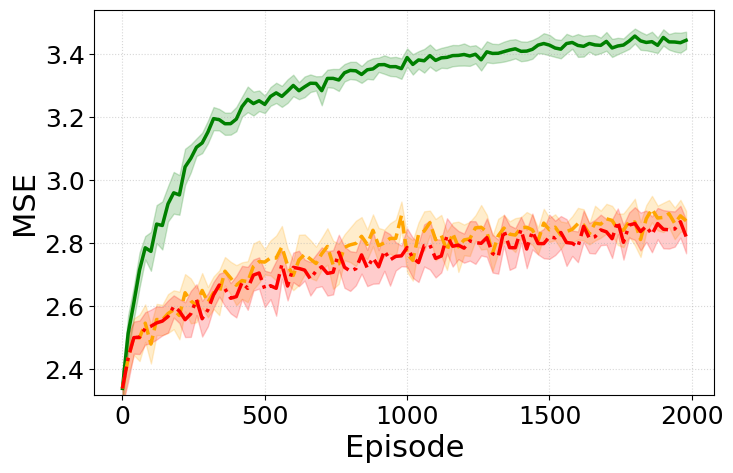}
        \vspace{-15pt}
        \caption{\centering Challenging observations $\EV [H(\Obs)] \approx 2.2$}
        \label{subfig:image2}
    \end{subfigure}
    \hfill
    \begin{subfigure}[b]{0.3\textwidth}
        \includegraphics[width=\textwidth]{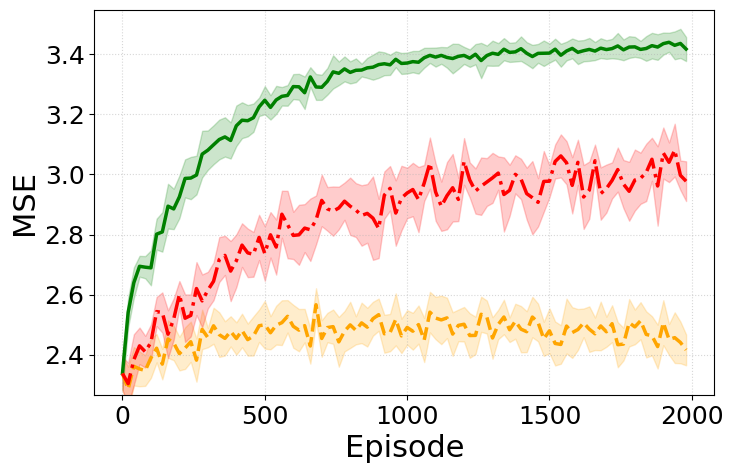}
        \vspace{-15pt}
        \caption{\centering Challenging observations with structure $\EV [H(\Obs)] \approx 1.85$}
        \label{subfig:image3}
    \end{subfigure}
    \vspace{-5pt}
    \caption{Entropy on latent states (MSE) achieved by \emph{PG for MSE}, \emph{PG for MOE}, and \emph{PG for Reg-MOE} in gridworlds with various $\Obs$. We report the average and 95\% c.i. over 16 runs.}
    \vspace{-0.5cm}
    \label{fig}
\end{figure*}


\section{Related Work}
\label{sec:related_work}

This work rests in the intersection between POMDPs, state entropy maximization, and policy optimization. Here we report a list of the most relevant contributions in those areas.

\textbf{POMDPs.}~~
Learning and planning problems in POMDPs have been extensively studied. In the most general formulation, POMDPs are known to be computationally and statistically intractable~\citep{papadimitriou1987complexity, krishnamurthy2016pac, jin2020sample}. Nonetheless, several recent works have analyzed tractable sub-classes of POMDPs under convenient structural assumptions, such as~\citep[][]{jin2020sample, golowich2022planning, chen2022partially, liu2022partially, zhan2022pac, zhong2023gec}.

\textbf{State Entropy Maximization.}~~
State entropy maximization in MDPs has been introduced in~\citet{hazan2019provably}, from which followed a variety of subsequent works focusing on the problem from various perspectives~\citep{lee2019smm, mutti2020ideal, mutti2021task, mutti2022importance, mutti2022unsupervised, mutti2023unsupervised, zhang2020exploration, guo2021geometric, liu2021behavior, liu2021aps, seo2021state, yarats2021reinforcement, nedergaard2022k, yang2023cem, tiapkin2023fast, jain2023maximum, kim2023accelerating, zisselman2023explore}.
Among them,~\citet{savas2022entropy} indeed study the problem of maximizing the entropy over trajectories induced in a POMDP, yet we are the first to formulate \emph{state} entropy maximization in POMDPs in this paper and, concurrently, \citet{zamboni2024explore}.

\textbf{Policy Optimization.}~~
First-order methods have been extensively employed to address non-concave policy optimization~\citep{sutton1999policy, peters2008reinforcement}. In this work, we proposed a \emph{vanilla} policy gradient estimator~\citep{williams1992simple} as a first step, yet several further refinements could be made, such as natural gradient~\citep{kakade2001natural}, trust-region schemes~\citep{schulman2015trust}, and importance sampling~\citep{metelli2018policy}.

\section{Conclusions}
\label{sec:conclusions}

In this paper, we made a step forward into generalizing state entropy maximization in POMDPs.
Specifically, we addressed the problem of learning a policy conditioned only by observations that target the entropy over the latent states. We proposed the simple approach of optimizing the entropy over observations in place of latent states and we formally characterized the instances where it is effective by deriving approximation bounds of the latent objective that depend on the structure of the observation matrix. 
Finally, we design a family of policy gradient algorithms to optimize the observation entropy in practice and to exploit knowledge of the observation structure when available.

Before concluding, it is worth mentioning that state entropy maximization can find further motivation in POMDPs beyond its common use in MDP settings. While how those methods can benefit offline data collection and transition model estimation is less obvious under partial observability, it is worth noting that the reward in a POMDP is usually defined over the true states, such that pre-training a policy to explore over them is still relevant~\citep{eysenbach2021information}. Moreover, the policy we aim to learn is commonly a mapping from beliefs to actions, for which ensuring good exploration over beliefs is important. Interestingly, in POMDPs one may choose to pre-train belief representations alone, to be transferred to various downstream tasks. For this problem, accessing data with coverage over the space of beliefs and, consequently, true states is essential. Exploring other potential uses of state entropy maximization in POMDPs is a nice future direction.

Future works may extend our results in many other directions, such as enhancing our algorithms by incorporating recent advancements in policy optimization for MSE~\cite[e.g.,][]{liu2021behavior} and designing alternative objectives and algorithms to target domains where the entropy of observations is not enough~\citep{zamboni2024explore}.
To conclude, we believe that this work sets a crucial first step in the direction of extending state entropy maximization to yet more practical settings.

\bibliography{biblio}
\bibliographystyle{rlc}

\appendix

\newpage

\section{Missing Proofs}
\label{apx:theory}

Here we report the derivations of the policy gradient reported in Proposition~\ref{prop:policy_gradient}.
Especially, we write
\begin{align}
    \nabla_\theta \textbf{H}(X|\pi_\theta) &=  \nabla_\theta \sum\nolimits_{(\vx, \va) \in \X^T \times \A^T} q^{\pi_\theta}_{XA} (\vx, \va)H(X|\vx)
    \\
    &=  \sum\nolimits_{(\vx, \va) \in \X^T \times \A^T} \Big( \nabla_\theta q^{\pi_\theta}_{XA} (\vx, \va)\Big)H(X|\vx) \\ 
    &=  \sum\nolimits_{(\vx, \va) \in \X^T \times \A^T} q^{\pi_\theta}_{XA} (\vx, \va) \nabla_\theta \log q^{\pi_\theta}_{XA} (\vx, \va) H(X|\vx) \\
    &=  \EV_{(\vx, \va) \sim q^{\pi_\theta}_{XA}}\big[\nabla_\theta \log q^{\pi_\theta}_{XA} (\vx, \va) H(X|\vx)\big]
\end{align}
by exploiting the linearity of the expectation to go from the first to the second equality, then applying the common log-trick~\citep{peters2008reinforcement} and finally recognising the sum as an expectation again.

To derive the gradient we then have to provide the calculation of the \emph{policy scores} $\nabla_\theta \log q^{\pi_\theta}_{XA} (\vx, \va)$. For every $\pi \in \Pi_\Theta$, we notice that $q^{\pi_\theta}_{XA} (\vx,\va) = \prod_{t \in [T]}Pr(x_t = \vx[t])\pi_\theta (a_t = \va[t] |x_t = \vx[t])$ and that the only term depending on $\theta$ is the policy itself. By exploiting the properties of the logarithm we have
\begin{align}
    \nabla_\theta \log q^{\pi_\theta}_{XA} (\vx, \va) &=  \sum_{t \in [T]} \nabla_\theta \log \pi_\theta (a_t = \textbf{a}[t] |x_t = \textbf{x}[t])
\end{align}

which leads to the standard REINFORCE formulation~\citep{williams1992simple}.

\clearpage

\section{Additional Details on the Experiments}
\label{apx:experiments}

In the following, we report additional details on the experiments of Section~\ref{sec:experiments}. Specifically, we describe the employed domains and their properties in Appendix~\ref{apx:environments_visualization}, we comment on the choice of hyper-parameters in Appendix~\ref{apx:hyper_parameters}, and on the effect of the regularization on the results of \emph{PG for Reg-MOE} in Appendix~\ref{apx:regularization}.

\subsection{Domains}
\label{apx:environments_visualization}

\begin{minipage}{\textwidth}
\begin{minipage}{0.5\textwidth}
\centering
\includegraphics[width=0.5\textwidth]{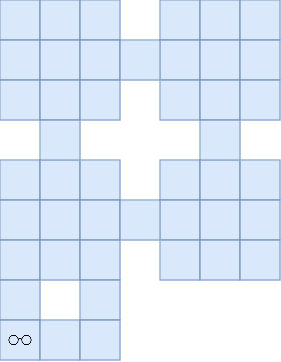}\label{fig:env}
\end{minipage}
\begin{minipage}{0.5\textwidth}

Most of the reported experiments refer to the gridworld reported on the left, which is composed of a set of rooms connected by narrow corridors. The grid is composed of $44$ cells, which define both the set of states ($|\S| = 44$) and observations ($|\X| = 44$). The set of actions $\A$ include an action to move to the adjacent cell in every direction ($|\A| = 4$). To every action is associated a probability of failure $\bar p = 0.1$ that leads the agent to an adjacent cell (at random) different from the one intended by the taken action. The episode horizon is $T = 55$ and the initial state distribution $\mu$ was set to be a deterministic over the top-left cell. 
\end{minipage}
\vspace{0.5cm}
\end{minipage}

\begin{figure*}[b!]
    \centering 
    \begin{subfigure}[b]{0.30\textwidth}
        \includegraphics[width=1\textwidth]{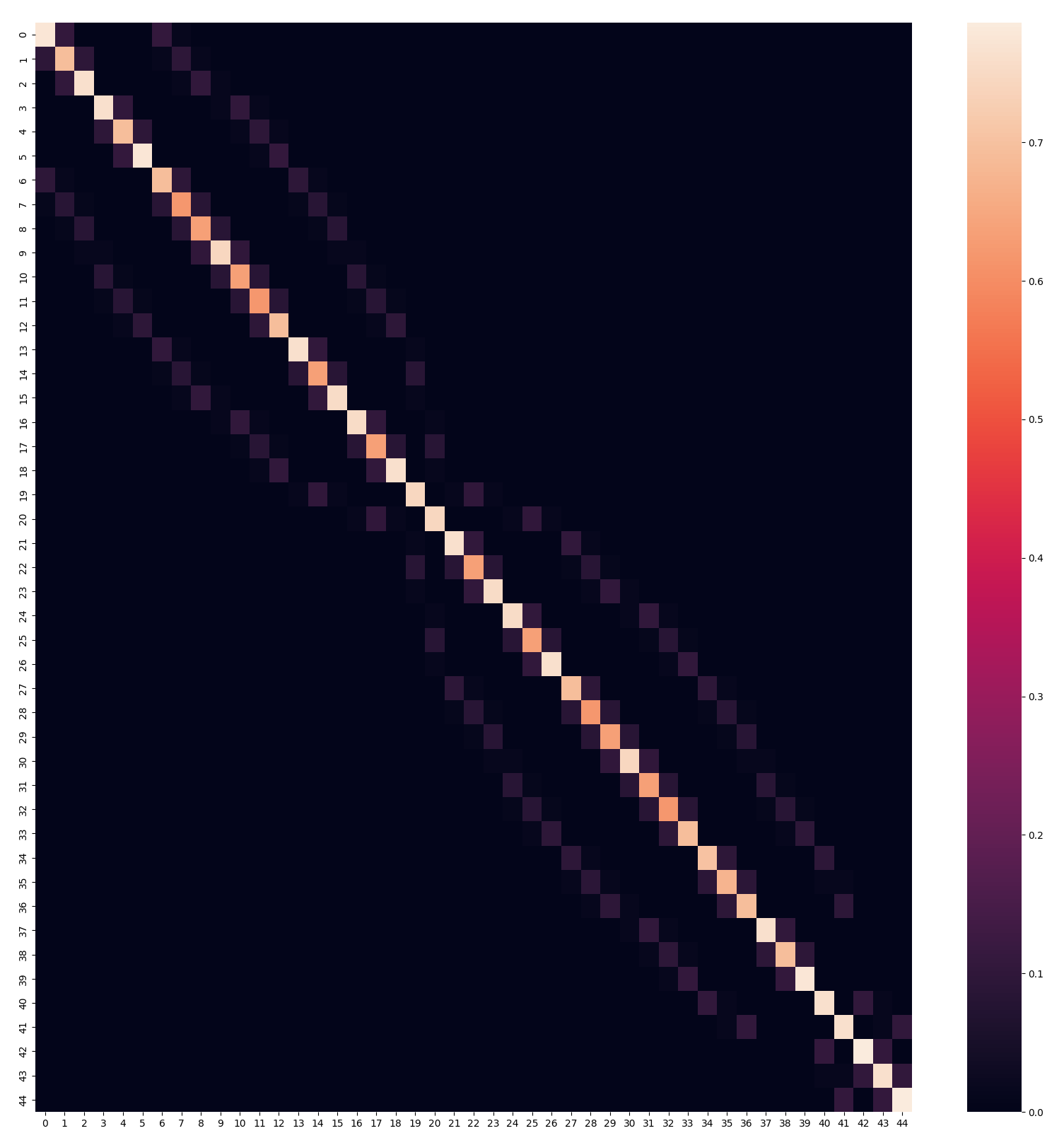}
        \caption{Observation matrix of the experiment in Figure \ref{subfig:image1}}
        \label{subfig:obs_image1}
    \end{subfigure}
    \hfill 
    \begin{subfigure}[b]{0.30\textwidth}
        \includegraphics[width=\textwidth]{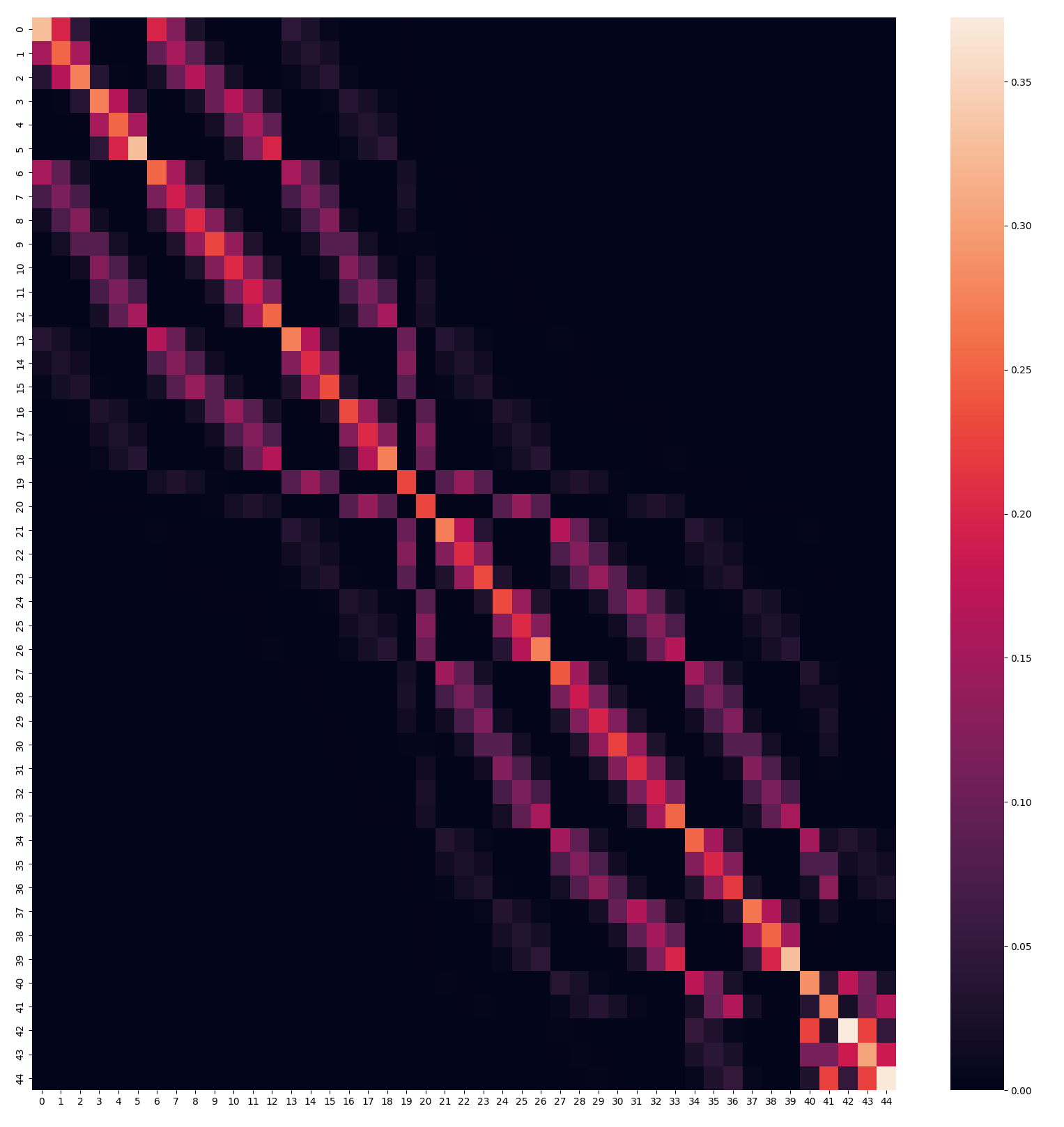}
        \caption{Observation matrix of the experiment in Figure \ref{subfig:image2}}
        \label{subfig:obs_image2}
    \end{subfigure}
    \hfill
    \begin{subfigure}[b]{0.30\textwidth}
        \includegraphics[width=\textwidth]{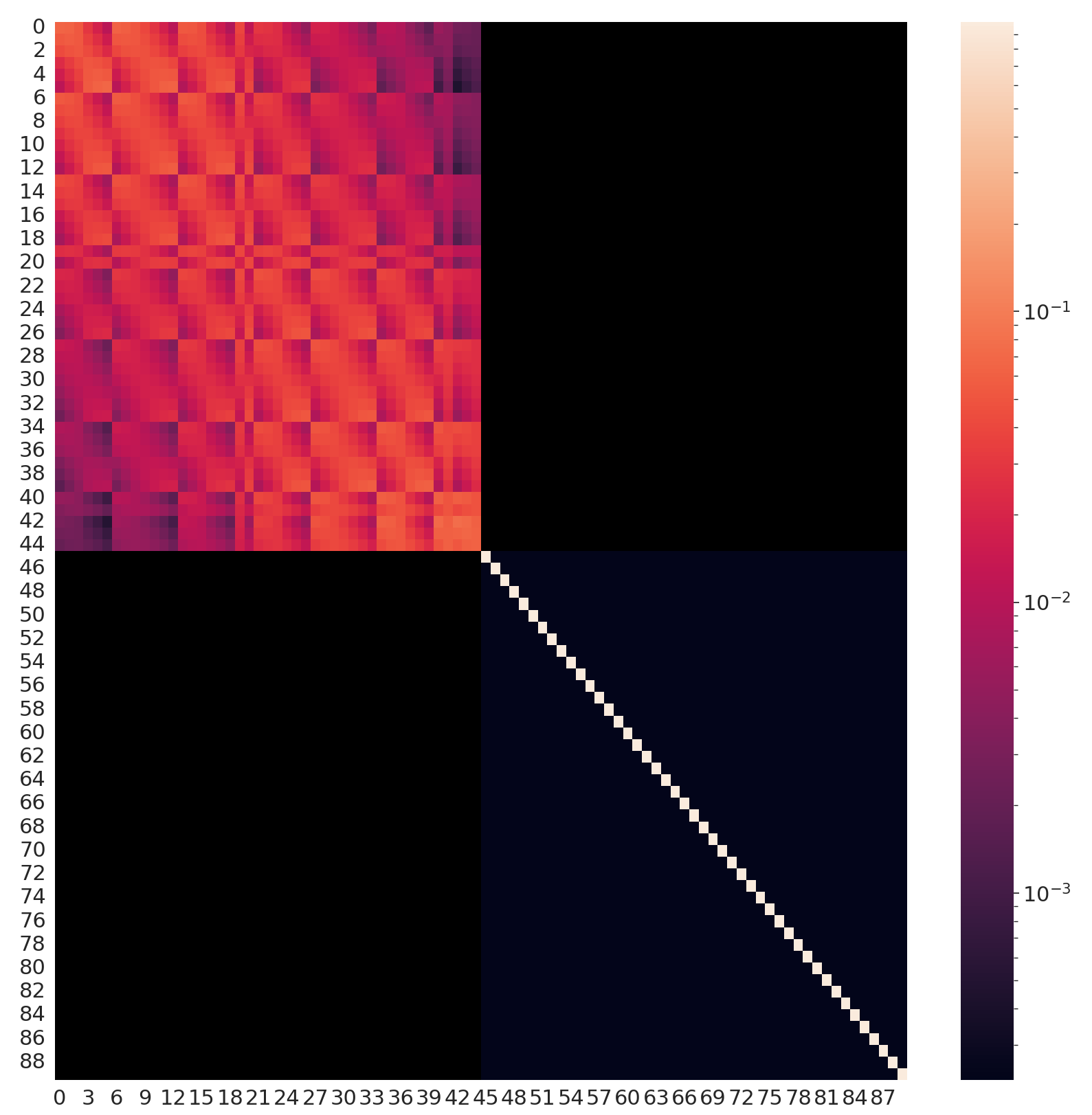}
        \caption{Observation matrix of the experiment in Figure \ref{subfig:image3}}
        \label{subfig:obs_image3}
    \end{subfigure}

    \caption{Heatmaps representing the observation matrix $\Obs$ employed in the experiments of Section~\ref{sec:experiments}. Note that in Figure~\ref{subfig:obs_image3} the colormap has logarithmic scale.}
    \label{fig:observation_matrices}
\end{figure*}

The glasses icon in the bottom left cell of the grid represents a state that ``flips'' the behavior of the observations. This is only relevant in the experiment in Figure~\ref{subfig:image3} and is better explained below. All the experiments of Section~\ref{sec:experiments} were performed with a regularization factor $\beta = 0.8$ (for \emph{PG for Reg-MOE}) and a learning rate of $\alpha=0.9$. Finally, the batch size was $N = 6$ and the number of independent runs was set to $16$.

\paragraph{Observations.}
The observations were set to be Gaussian distributions $\mathcal G(0,\sigma^2)$ over the Manhattan distance centered in the true state and without caring about any obstacles, with $0$ mean and different values of variance $\sigma^2$. The resulting observation matrices are reported in Figure~\ref{fig:observation_matrices}. Finally, the effect of ``wearing'' the glasses (i.e., reaching the bottom-left cell of the grid) is to make the observation function fully deterministic. Note that the information on whether the agent wears the glasses is encoded in the state themselves, doubling the size of the set of states to $|\S| = 88$. 

\subsection{Hyper-Parameters Selection}
\label{apx:hyper_parameters}

 In this section, we briefly discuss the choice behind the selection of specific hyper-parameters employed in the experiments. 

\paragraph{Learning Rate.}
As for the learning rate $\alpha$, a value of $\alpha = 0.9$ was selected across the experiments. As one can see from Figure~\ref{fig:alpha_val}, the best performance were reached with a learning rate between $\alpha=1$ and $\alpha=0.7$, so $\alpha = 0.9$ can be seen as a robust choice across the boards.

\begin{figure*}[ht]
    \centering \includegraphics[width=0.9\textwidth]{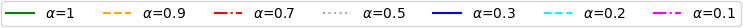}
    
    \begin{subfigure}[b]{0.32\textwidth}
        \includegraphics[width=\textwidth]{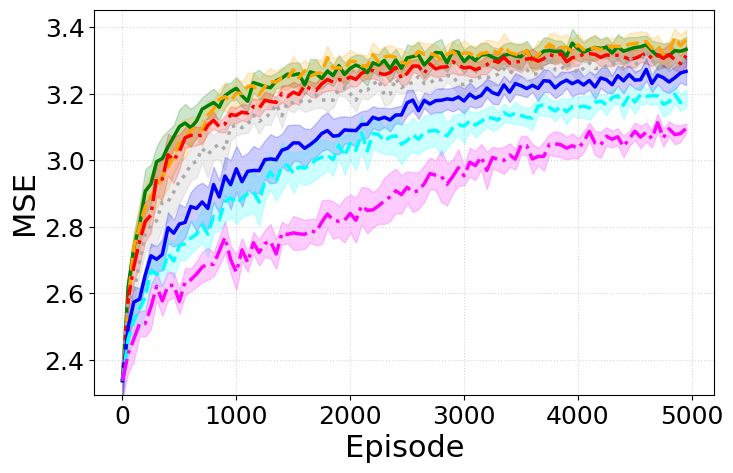}
        \caption{PG for MOE (observation matrix as in Figure~\ref{subfig:obs_image1})}
    \end{subfigure}
    \hfill
    \begin{subfigure}[b]{0.32\textwidth}
        \includegraphics[width=\textwidth]{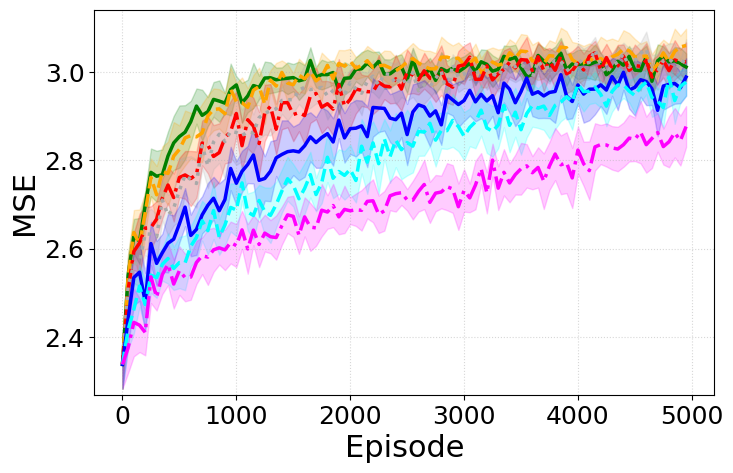}
        \caption{PG for MOE (observation matrix as in Figure~\ref{subfig:obs_image2})}
    \end{subfigure}
    \hfill
    \begin{subfigure}[b]{0.32\textwidth}
        \includegraphics[width=\textwidth]{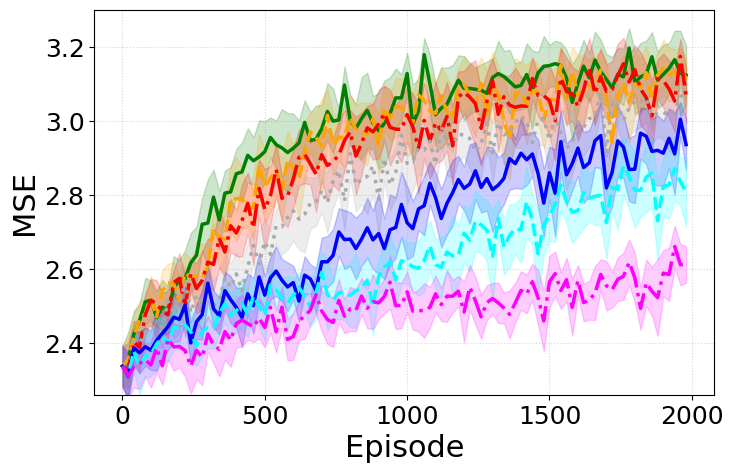}
        \caption{PG for Reg-MOE (observation matrix as in Figure~\ref{subfig:obs_image3})}
    \end{subfigure}
    
    \caption{Comparison of the performance with different values of the learning rate for various algorithms and domains.}
    \label{fig:alpha_val}
\end{figure*}

\paragraph{Regularization.}
As for the regularization term $\beta$, the best performance for the various instances was generally reached with $\beta \in (0.3,1)$, as shown in Figure~\ref{fig:beta_val} (the learning rate is fixed to $\alpha = 0.9$). For lower values of $\beta$, the effect of the regularization is almost negligible, while for higher values of $\beta$ the agent tended to over-optimize the regularization term in place of the entropy over observations, reducing performance. As one would expect, the best value for the regularization depend on the specific POMDP instance.

\begin{figure*}[ht]

    \centering \includegraphics[width=0.9\textwidth]{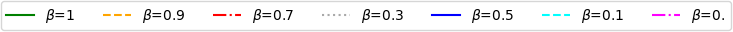}

    \begin{subfigure}[b]{0.32\textwidth}
        \includegraphics[width=\textwidth]{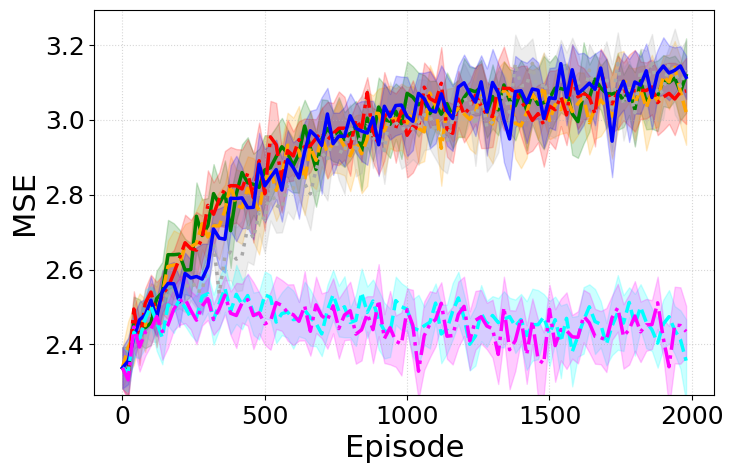}
        \caption{With glasses (variance of observations $\sigma^2 = 10$)}
    \end{subfigure}
    \hfill
    \begin{subfigure}[b]{0.32\textwidth}
        \includegraphics[width=\textwidth]{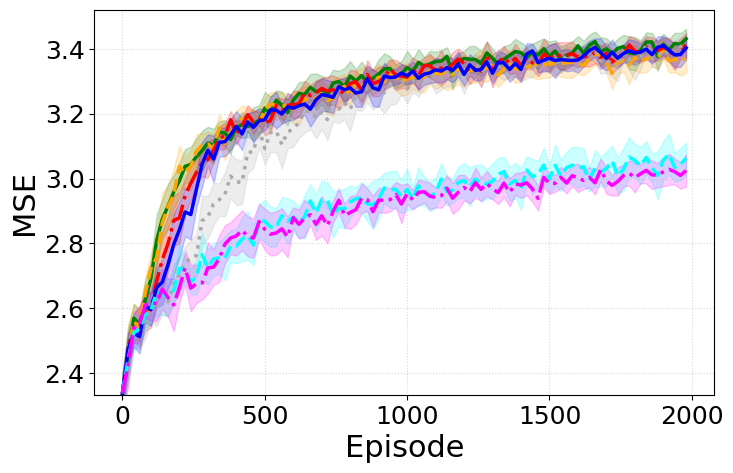}
        \caption{With glasses (variance of observations $\sigma^2 = 1$)}
    \end{subfigure}
    \hfill
    \begin{subfigure}[b]{0.32\textwidth}
        \includegraphics[width=\textwidth]{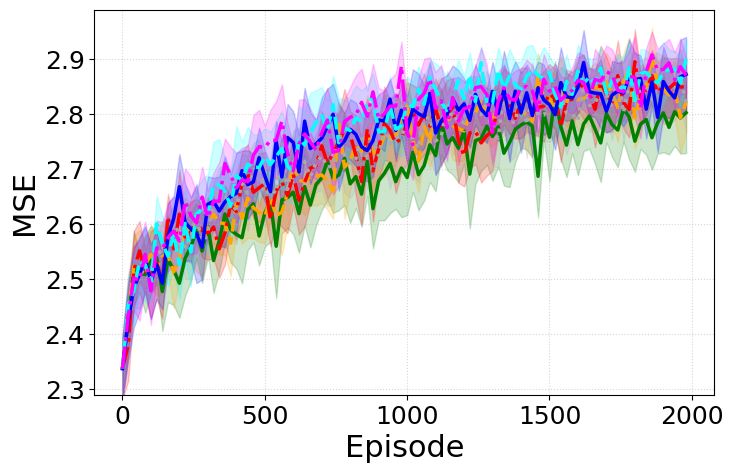}
        \caption{Without glasses (variance of observations $\sigma^2 = 0.25$)}
    \end{subfigure}
    
    \caption{A comparison of different values of regularization for varying emission matrices' quality and settings with and without glasses. For the low value of regularization, the performances of Reg-MOE are equivalent to the MOE performances.}
    \label{fig:beta_val}
\end{figure*}

\subsection{Further Insights on the Effect of the Regularization}
\label{apx:regularization}

\begin{minipage}{\textwidth}

   \begin{minipage}{0.5\textwidth} In this section, we further investigate the effect of the regularization term. For this specific test, we consider a different gridworld configuration than previous experiments, which is reported on the right. The observation matrix is designed as a Gaussian $\mathcal G(0,\sigma^2)$ over the Manhattan distance in the blue rooms, while it is deterministic (and thus fully revealing) in the red room. 
   For this experiment, we set the variance to $\sigma^2=1$, the regularization term to $\beta = 0.3$, and the horizon $T = 40$. As for the remaining parameters, they are kept as in the previous experiments.
   \end{minipage}    
    \begin{minipage}{0.5\textwidth}\centering\includegraphics[width=0.6\textwidth]{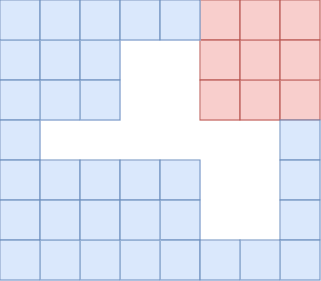}\end{minipage}
\end{minipage}

Figure~\ref{fig:policy} shows that, in this experiment, the two learned policies have similar performances. Yet, as can be seen, while the policy trained with \emph{PH for MOE} tries to explore the environment uniformly, the one trained with \emph{PG for Reg-MOE} successfully explored the portion of the grid with lower entropy in the observations, to later address a deeper exploration of the remaining rooms. This behaviour exactly aligns with the role of the regularization term, which should indeed make the agent prefer observations that are emitted with lower entropy by the observation function.

\begin{figure*}[h]

    \centering 
    \begin{subfigure}[b]{0.3\textwidth}
        \includegraphics[width=1\textwidth]{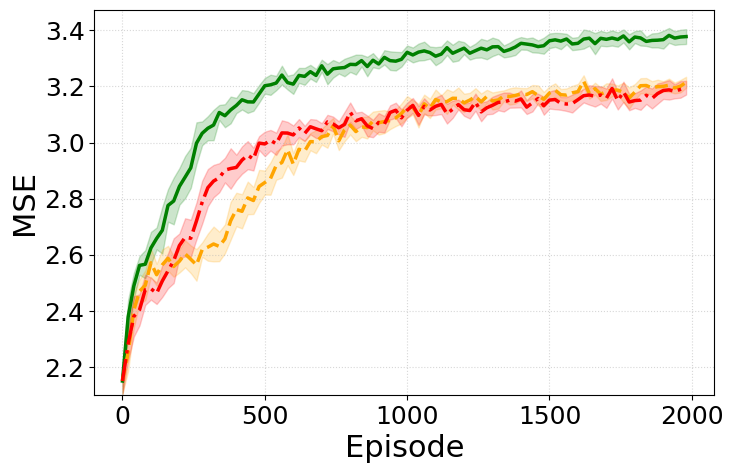}
        \caption{MSE performance of PG for MSE (green), PG for MOE (orange), PG for Reg-MOE (red)}
    \end{subfigure}
    \hfill
    \begin{subfigure}[b]{0.3\textwidth}
        \includegraphics[width=\textwidth]{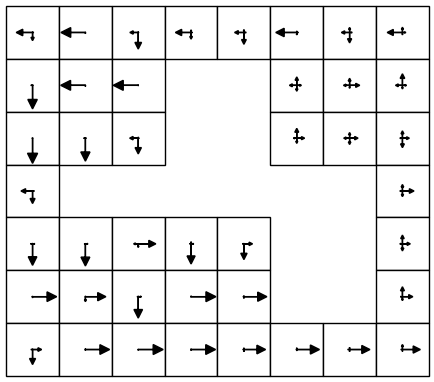}
        \caption{PG for MOE}
    \end{subfigure}
    \begin{subfigure}[b]{0.3\textwidth}
        \includegraphics[width=\textwidth]{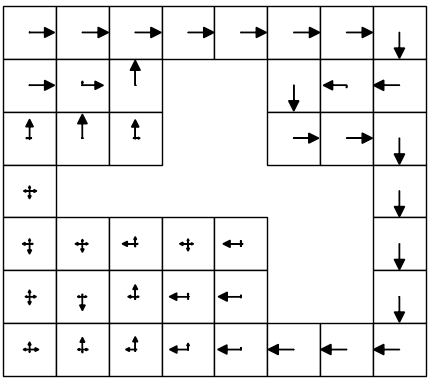}
        \caption{PG for Reg-MOE}
    \end{subfigure}
    
    \caption{Comparison of the policies learned by PG for MOE and PG for Reg-MOE over 2000 episodes. The magnitude of each arrow is proportional to the probability of the policy to choose that action, after marginalizing over all the possible observations emitted in that state.}
    \label{fig:policy}
\end{figure*}

\end{document}